\documentclass{article}

\usepackage[preprint]{tmlr}

\usepackage[utf8]{inputenc}
\usepackage[T1]{fontenc}
\usepackage{hyperref}
\usepackage{url}
\usepackage{array}
\usepackage{booktabs}
\usepackage{siunitx}
\usepackage{amsfonts}
\usepackage{amsmath}
\usepackage{amssymb}
\usepackage{amsthm}
\usepackage{nicefrac}
\usepackage{microtype}
\usepackage{graphicx}
\graphicspath{{figs/}}
\usepackage{setspace}
\usepackage{placeins}

\newtheorem{proposition}{Proposition}
\theoremstyle{definition}
\newtheorem{example}{Example}

\title{What a Deletion Certificate Covers, and Where It Expires:\\
Auditable Removal from a Support-Vector Memory\thanks{Code and evidence: \url{https://github.com/VyLabs-AI/sv-attention}.}}

\author{\name Vishwajith Ramesh \email vish@vylabs.ai \\
        \addr Vy Labs, Inc., Cupertino, California, USA}

\def\month{09}
\def\year{2026}

\hypersetup{
  hidelinks,
  pdfauthor={Vishwajith Ramesh},
  pdftitle={What a Deletion Certificate Covers, and Where It Expires: Auditable Removal from a Support-Vector Memory},
  pdfsubject={Public preprint aligned with journal scientific content}
}

\begin{document}

\maketitle

\begin{abstract}
We study deletion in a context memory that fits a support-vector boundary around stored keys and uses the resulting nonnegative coefficients to weight their values. An exactly zero coefficient lets us remove a key without changing the current normalized readout. In a three-key construction, however, a later admission makes the discarded key active in a solve over the full history. For a positive-weight key, a decremental update targets a refit on the remaining keys at the original coefficient cap, under its stated assumptions. Earlier standalone comparisons show close typical gate-score agreement, while similar keys with different values can yield larger readout differences. The journal extension tests chained deletions and admissions, checking coefficient feasibility and optimality conditions in both returned states and fresh references. The initial audit completes 66 of 80 trajectories and exposes numerical failures. A separately checked wrapper completes all \(5{,}120\) scheduled operations at unchanged acceptance thresholds, with eight initialization rescues and 84 update rescues. Its largest readout difference is \(0.5273\%\) of retained-value range. These finite results support explicit acceptance and rescue rules for the evaluated memory edits. They do not extend the current-state removal guarantee to arbitrary future admissions.

\end{abstract}

\section{Introduction}

As a context grows, a memory has to decide which entries to keep. One
possibility is to remove entries that contribute nothing to its output.
We study this choice in a memory where each stored key has an explicit
nonnegative weight. The weights determine how stored values contribute to
the normalized output, or \emph{readout}, so we can check exactly what
removing an entry changes. We call an entry with zero weight a
\emph{reserve key}. Under the assumptions below, removing it preserves
the solved readout; we refer to this guarantee as a \emph{certificate}.
The question is whether the same entry will remain unnecessary as the
memory receives more information.

We obtain the weights from a one-class support-vector fit around the stored
keys~\citep{ocsvm,svdd}, while the query determines which keys resemble the
information being requested (Section~\ref{sec:gate}). For example, in an
hourly ICU memory, a key can encode standardized vital signs and its value
can hold an associated measurement or event label. A query describing a
pattern of vital signs would combine values from similar hours. Repeated,
nearly identical hours may be redundant for the fitted boundary, whereas an
unusual hour may affect it. This geometric distinction does not tell us
whether an hour is medically important or its content is private. A system
using this memory would therefore need to check what clinical information
survives selection. We take a first step by measuring retention of
threshold-defined event hours in Appendix~\ref{app:selection}; clinical
outcomes are not evaluated.

We use established support vector data description and maintained
support-vector optimization~\citep{svdd,cp2000} to remove entries from
this memory. For an entry with positive weight, we update the fit and
compare it with a fresh solve on the surviving keys, keeping the original
limit on each weight. This gives us a reference against which to check
the numerical update. We examine its agreement in \(1{,}200\) deletion
trials across synthetic, clinical and learned keys, including the cases
where score and readout agreement differ. Separately, a three-key
construction in Section~\ref{sec:expiry} shows what happens to a reserve
key when the context changes. The construction lets us examine later
admissions without confusing their effect with numerical errors in the
deletion update. Appendices~\ref{app:selection} and~\ref{app:training}
report our selection and trainability experiments.

The journal extension tests the numerical update through chained deletions
and admissions. The initial sequential audit exposes returned states and fresh
references that fail independently recomputed feasibility and optimality
checks. A separate wrapper then completes all \(5{,}120\) scheduled operations
with unchanged acceptance thresholds and explicit refit rescues. It checks a
candidate before publishing it and retains the previous valid state if rescue
fails. This gives an inference-time memory a tested procedure for accepting or
refusing an edit. Section~\ref{sec:journal-sequential-checks} reports both runs,
including the initial failures and the remaining readout differences. The
wrapper checks the inherited solver; it does not repair or prove its update
algorithm.

\section{Related Work}\label{sec:related}

Linear attention, fast-weight programmers, DeltaNet, mesa layers, Titans,
and selective state-space models compress context into recurrent or
test-time-trained state~\citep{lina,fwp,deltanet,mesa,mesanet,titans,mamba}.
The delta-attention study examines record omission in recurrent memory through
a transport criterion and replay reference~\citep{kdaomission}. The frozen-Gemma
study installs the support-vector gate in a pretrained model and evaluates
memory edits behaviorally~\citep{gemmamemory}.

Cache selectors choose which explicit entries to retain. H2O uses
recency and accumulated attention, with an approximation guarantee under its
assumptions~\citep{h2o}; SnapKV uses attention in a prompt observation
window~\citep{snapkv}. VeriCache maintains a full reference cache alongside its
pruned cache~\citep{vericache}. Each method provides a way to relate the
entries it keeps to the output it aims to preserve. We approach the problem
through a key's exactly zero coefficient in the current support-vector solve,
which gives us an algebraic condition for removing that key.

Deletion can also be defined against retraining without the target
data~\citep{caoyang,unlearn}. ECO uses decremental support-vector machinery for
that reference in a classifier head~\citep{eco}; here the target is an
inference-time refit on the remaining keys. OptNet and differentiable convex layers
obtain gradients through a forward problem's optimality
conditions~\citep{optnet,dcl}. Our maintained path reuses the inverse already
stored for its updates, so the same maintained system supports both memory
changes and differentiation. Our experiments below examine deletion in
this system.

\section{Methods}\label{sec:methods}

\subsection{A support-vector gate over context keys}\label{sec:gate}

We represent each token in the context by a key \(x_i\in\mathbb{R}^d\), which
serves as its address, and a value \(v_i\in\mathbb{R}^{d_v}\), which supplies
content to the output. A query \(q\) determines how closely the request
matches each key, and \(\alpha_i\geq0\) is the coefficient that our gate
assigns to key \(i\). We combine these two weights in the normalized readout
\begin{equation}
o(q)=
\frac{\sum_i \alpha_i\kappa(q,x_i)v_i}
     {\sum_i \alpha_i\kappa(q,x_i)},\qquad
\kappa(a,b)=\exp(-\lVert a-b\rVert^2/\sigma^2),
\end{equation}
where \(\kappa\) is a radial-basis-function similarity under which nearby
vectors score higher and \(\sigma\) sets the distance scale. To obtain the
coefficients, we solve support vector data description (SVDD), a convex
problem that fits a soft
boundary around the keys~\citep{svdd}:
\begin{equation}
\min_{\alpha}\;
\alpha^\top K\alpha-\operatorname{diag}(K)^\top\alpha
\quad\text{s.t.}\quad
\sum_i\alpha_i=1,\qquad 0\leq\alpha_i\leq C,
\label{eq:svdd}
\end{equation}
where \(K_{ij}=\kappa(x_i,x_j)\) and \(C\) is the cap, the largest coefficient
any single key may receive. We use the sparsity parameter \(\nu\) only once, to
set \(C=1/(\nu n_0)\) at a declared initial context size \(n_0\); deletion and
refit never change \(C\) when the number of keys changes. Because the
coefficients must sum to one, a problem with \(n\) keys is feasible only when
\(nC\geq1\).

That is, the coefficients divide a unit budget among the stored keys, with
each key limited to \(C\). We can remove a zero-weight key without changing
this allocation, whereas deleting a positive-weight key requires an update
to the fit. Note that a key's final share of the readout also depends on
query similarity and normalization; \(C\) limits its coefficient rather
than its final share.

The optimality conditions of Equation~\ref{eq:svdd} sort the keys into three
groups according to where their coefficient sits: the margin set
\(S=\{i:0<\alpha_i<C\}\), the upper-bound set \(E=\{i:\alpha_i=C\}\), and the
reserve set \(R=\{i:\alpha_i=0\}\). A reserve key contributes zero to both
the numerator and denominator of the readout, which is why we can remove it
from the solved state.

The fixed cap limits the number of keys that can be removed while preserving
feasibility. For \(\nu=0.5\) and \(n_0=12\), the cap is
\(C=1/6\). The retained coefficients can sum to one only while at least six keys
remain, so a request to delete five of the twelve keys can be answered and a
request to delete seven cannot: no assignment of at most \(1/6\) to each of five
survivors reaches a total of one. In general, a context of \(n\) keys admits at
most \(n-\lceil 1/C\rceil\) deletions before the retained problem becomes
infeasible at that cap. To delete more keys, we would have to change the cap
and rebuild the boundary, and would then be comparing against a different
reference problem.

We compare numerical solutions using the implementation's scalar \emph{gate
score},
\begin{equation}
s(x)=2\sum_i\alpha_i\kappa(x,x_i)-\rho,
\label{eq:gate-score}
\end{equation}
Writing \(f\) for the objective in Equation~\ref{eq:svdd}, the implementation
recovers \(\rho\) using the Lagrangian convention
\[
L(\alpha,\rho,\mu,\eta)=f(\alpha)+\rho(\mathbf1^\top\alpha-1)
-\mu^\top\alpha+\eta^\top(\alpha-C\mathbf1).
\]
Thus the Karush--Kuhn--Tucker (KKT) conditions give
\(2(K\alpha)_i-K_{ii}+\rho=0\) on a free margin key, so
\(\rho=K_{ii}-2(K\alpha)_i\). The factor two comes from differentiating the
quadratic term in \(f\). The nonnegative multipliers \(\mu,\eta\) enforce the
lower and upper bounds. The QP reference recovers this offset by averaging over margin keys, with a
most-interior-key fallback when none are free.

This offset is related to the SVDD radius, but Equation~\ref{eq:gate-score}
keeps the score convention used in the recorded experiments. With feature
center \(c=\sum_i\alpha_i\phi(x_i)\), let \(q_\alpha=\alpha^\top K\alpha\).
At an exact margin solution the squared radius is
\(R_{\rm svdd}^2=\rho+q_\alpha\), and the geometric inside-boundary slack is
\[
m(x)=R_{\rm svdd}^2-\lVert\phi(x)-c\rVert^2
=\rho-\kappa(x,x)+2\sum_i\alpha_i\kappa(x,x_i).
\]
Consequently \(s(x)=m(x)+\kappa(x,x)-2\rho\). For this unit-diagonal RBF
kernel the SVDD boundary has score \(1-2\rho\), not necessarily zero.
A reported score \emph{difference} of \(10^{-9}\) therefore describes an
offset-adjusted kernel-similarity difference, rather than a probability or
a direct distance error. We interpret its size by comparing it with the
reference score's spread on the same probes
(Section~\ref{sec:verification}); an individual score near zero does not
have a universal boundary meaning.

The gate score also differs from the readout \(o(q)\): it contains no values
\(v_i\). Redistributing coefficients between identical keys preserves the
weighted kernel sum, so this score lets us compare solutions even when
their coefficients differ. We also measure readout differences to check
what those solutions produce when values are included.

\subsection{Reference states and the scope of the certificate}\label{sec:contract}

To check a deletion, we need to specify the state we expect afterward. A
\emph{refit} re-solves an existing storage problem after selected entries have
been removed from it. Rebuild, replay, and causal regeneration reach other
states and are defined and audited in the delta-attention study~\citep{kdaomission}. The fixed-cap retained-key refit is the reference for active-key deletion. We also
distinguish an \emph{admission}, a token that arrives after the current solve,
from the keys already in the solved context.

The propositions below describe exact removal under their stated
assumptions. In a numerical implementation, we also need to check whether
the solver returned an acceptable state and how far it differs from the
reference at declared tolerances. A streaming policy adds another
complication: it removes entries repeatedly while the context changes.
We compare such a policy with a never-pruned full-history trajectory.
Table~\ref{tab:claims} collects these references and the numerical paths
used for each.

\begin{table}[t]
\centering
\small
\caption{References and numerical paths. Exact statements retain the assumptions of
their propositions; numerical results require separate acceptance and
agreement checks.}
\label{tab:claims}
\begin{tabular}{@{}>{\raggedright\arraybackslash}p{0.17\linewidth}>{\raggedright\arraybackslash}p{0.27\linewidth}>{\raggedright\arraybackslash}p{0.22\linewidth}>{\raggedright\arraybackslash}p{0.26\linewidth}@{}}
\toprule
claim & reference & arithmetic path & not covered \\
\midrule
reserve removal & current fixed-\(C\) state, by algebra & fp64 residual measured & equivalence after future admissions \\
active deletion & retained-key refit at the same \(C\) (Prop.~\ref{prop:decrement}, unique optimum) & maintained fp64 & changed-\(C\) refit; re-encoding; deletion from weights \\
streaming pruning & never-pruned full-history trajectory & maintained fp64 & certified equality; measured only \\
batched training & none (empirical policy) & ridged, packed fp32 autograd & decrement and stored-inverse guarantees \\
\bottomrule
\end{tabular}
\end{table}

We use different solvers for verification and training. For verification,
we update the existing solution in fp64 after each single-key change and
periodically recompute the stored inverse from scratch to control
accumulated rounding. During training, we solve many coefficient problems
in parallel with a stabilized fp32 approximation and a \(10^{-3}\)
diagonal ridge (Appendix~\ref{app:training}). This path is differentiable,
but it does not implement the maintained deletion update. To audit a
trained model, we freeze its keys and re-solve them in fp64, then check the
returned states and their agreement with the reference.

\subsection{Deleting an active key: the decremental update}\label{sec:decrement}

To delete a positive-weight key, we must redistribute its coefficient
among the surviving keys. We use the reverse update of
\citet{cp2000}, which decreases the target coefficient to zero while maintaining the
optimality conditions. For fixed margin and upper-bound sets \(S,E\), those
conditions and the unit-sum constraint form one coupled linear system:
\begin{equation}
\underbrace{\begin{bmatrix}0&\mathbf1^\top\\\mathbf1&2K_{SS}\end{bmatrix}}_{M}
\begin{bmatrix}\rho\\\alpha_S\end{bmatrix}
=\begin{bmatrix}1-C|E|\\\operatorname{diag}(K)_S-2C K_{SE}\mathbf1\end{bmatrix}.
\label{eq:kkt-system}
\end{equation}
The border enforces the coefficient sum while the lower rows keep every free
key stationary. Updating coefficients alone could violate either condition;
\(M\) determines how the offset and free coefficients must move together.
The maintained inverse \(\mathcal R=M^{-1}\) supplies these update directions
without a new full solve at each step. It expands or contracts when keys move
between \(S,E,R\), provided the margin system remains nonsingular.
In the earlier standalone audit, we fell back to the precomputed refit
at the same cap when the maintained update did not complete
(Appendix~\ref{app:protocols}).

\subsection{Verification protocol}\label{sec:protocol}

We compared the decrement with a refit on the remaining keys at identical \(C\) on
four regimes: Gaussian keys, redundant near-duplicates, real hourly clinical
vitals from MIMIC-IV~\citep{mimiciv}, and keys produced by a trained recall
model. Each regime contributed \(300\) attempted trials, for \(1{,}200\) in
total. Each trial removed two keys and evaluated the gate on three probe sets
(the retained keys, the removed keys, and the retained keys perturbed by
Gaussian noise of scale \(0.1\)). As a comparator, we evaluated \emph{coefficient decay},
which leaves the target keys in place, multiplies their coefficients by
\(0.01\), and does not re-solve.

We computed the reference refit before attempting the decrement on each
trial, allowing us to use it both for comparison and as a fallback. We
report whether a result was returned and by which path, followed by how
closely the completed decrements agreed with the reference. This keeps
trials that needed a fallback visible alongside the agreement results.
For each trial, we measured the largest absolute gate-score difference
over the declared probe sets; the measurement samples agreement on those
sets. Appendix~\ref{app:protocols} gives the solver configuration and the
diagnostics recorded per trial.

\subsection{Use of AI assistance}

OpenAI Codex assisted with additional experiment design, code implementation,
execution and analysis, and manuscript preparation, including claim and prose
revisions. This assistance included content preparation beyond spelling and
copyediting.

\section{Results}\label{sec:results}

\subsection{Removing a reserve key leaves the solved readout unchanged}\label{sec:reserve-results}

\begin{proposition}[Point-in-time reserve certificate]\label{prop:reserve}
Let \(\alpha^\star\) solve Equation~\ref{eq:svdd} for a fixed context and fixed
\(C\). If \(\alpha^\star_j=0\), then evicting key \(j\) without re-solving
leaves the objective value and the normalized readout \(o(q)\) unchanged for
every query with nonzero denominator. The restricted coefficient vector is
also optimal for the retained-key problem.
\end{proposition}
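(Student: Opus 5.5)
The plan is to write the retained-key problem as the restriction of Equation~\ref{eq:svdd} to the face \(\{\alpha:\alpha_j=0\}\), and then argue in three steps: objective, readout, optimality. Let \(\tilde\alpha\) be \(\alpha^\star\) with coordinate \(j\) deleted, and let \(\tilde K\) be \(K\) with row and column \(j\) removed.

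\textbf{Objective and feasibility.} The objective is a polynomial in \(\alpha\) in which every term involving index \(j\) carries a factor \(\alpha_j\); these are \(2\alpha_j\sum_{i\neq j}K_{ij}\alpha_i\), \(K_{jj}\alpha_j^2\), and \(-K_{jj}\alpha_j\). Setting \(\alpha_j=0\) therefore gives the identity
\[
f(\alpha^\star)=\tilde\alpha^\top\tilde K\tilde\alpha-\operatorname{diag}(\tilde K)^\top\tilde\alpha .
\]
Feasibility also carries over. The sum constraint holds because \(\sum_{i\neq j}\alpha^\star_i=1-\alpha^\star_j=1\). The box constraints \(0\le\alpha_i\le C\) are inherited coordinatewise, since \(C\) is unchanged.

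\textbf{Readout.} In both the numerator and the denominator of \(o(q)\), the \(j\)-th summand is \(\alpha^\star_j\kappa(q,x_j)(\cdot)=0\). Deleting it leaves both sums unchanged. Whenever the denominator is nonzero the quotient is the same, so \(o(q)\) is unchanged for every such query.

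\textbf{Optimality of \(\tilde\alpha\).} The cleanest route is embedding. Take any \(\beta\) that is feasible for the retained-key problem, and extend it by a zero in position \(j\) to get \(\hat\beta\). Then \(\hat\beta\) is feasible for the full problem, and by the same term-vanishing identity its full objective equals the restricted objective of \(\beta\). Optimality of \(\alpha^\star\) gives
\[
f(\alpha^\star)\le f(\hat\beta)=\tilde f(\beta),
\]
while \(f(\alpha^\star)=\tilde f(\tilde\alpha)\) by the identity above. Hence \(\tilde\alpha\) minimizes the retained problem. An alternative is to restrict the KKT conditions: drop the \(j\)-th stationarity row and the multiplier \(\mu_j\), and keep the same \(\rho\), \(\mu_{-j}\), \(\eta_{-j}\). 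This works because the gradient entries \(2(K\alpha^\star)_i-K_{ii}\) for \(i\neq j\) do not depend on the \(j\)-th column once \(\alpha^\star_j=0\). The KKT route has the added benefit of showing that the offset \(\rho\), and hence the gate score, also transfers. I would state the embedding argument as the proof and mention the KKT version as a remark.

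\textbf{Main obstacle.} The proof has no real technical difficulty. The points that need care are scope. The claim holds for every query with nonzero denominator in the original readout, and the retained denominator is identical, so it is nonzero too. Optimality refers to this fixed context and this fixed \(C\) only. Nothing is asserted about uniqueness, about a changed \(C\), or about later admissions, because adding keys changes the problem itself. I would also check that the retained problem remains feasible, i.e.\ \((n-1)C\ge1\). This follows automatically from the existence of the feasible \(\tilde\alpha\), so no separate assumption is needed.
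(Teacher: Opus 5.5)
Your proposal is correct and follows essentially the same route as the paper. Restricting \(\alpha^\star\) preserves the constraints and the objective because every \(j\)-term carries the factor \(\alpha^\star_j=0\), optimality follows by extending any feasible retained-key point with a zero at \(j\), and the readout loses only a zero-weight summand in both numerator and denominator (your KKT remark on \(\rho\) transferring is a harmless extra the paper does not use).
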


\begin{proof}
Restricting \(\alpha^\star\) to the retained indices preserves the coefficient
sum and bounds, and every objective term involving \(j\) vanishes because
\(\alpha^\star_j=0\). Conversely, any better retained-key solution could be
extended by a zero coefficient at \(j\), contradicting optimality of
\(\alpha^\star\). The numerator and denominator of the readout likewise lose
only a zero-weight term.
\end{proof}

In the implementation, we classify reserve coefficients at a declared
tolerance and record the residual change in the readout. For example, the
residual in Example~\ref{ex:future} is \(4.6\times10^{-9}\), where
Proposition~\ref{prop:reserve} gives zero in exact arithmetic.

\subsection{A reserve key can become active after a later admission}\label{sec:expiry}

We next fit a boundary to three keys and add a fourth key far from the
group. This moves the boundary enough that a key previously assigned zero
weight becomes active. We compare the two histories below: one that kept
the reserve key and one that discarded it.

\begin{example}[A reserve key reactivated by a later token]\label{ex:future}
Take the similarity of Section~\ref{sec:gate} with keys \((-1,0)\), \((1,0)\),
and \((0,0.9)\), cap \(C=1\), and width \(\sigma=5\). The third key is reserve:
its coefficient is exactly zero. For the query \(q=(0.25,0.1)\) and scalar
values \((1,2,9)\), removing the third key moves the gate score by less than
\(10^{-12}\) and the readout by \(4.6\times10^{-9}\), consistent with the
current-state certificate up to the recorded numerical residual. Admitting one
more token at
\((0,-10)\) and solving over the full history gives the formerly reserve key a
coefficient of \(0.033\), so it becomes active, and the gate score of the state that
pruned it differs from the full-history reference by \(4.1\times10^{-3}\) over
the declared probes.
\end{example}

\begin{proposition}[No future-equivalence guarantee from reserve status alone]
\label{prop:not-future}
Membership in the reserve set of the current solved problem is insufficient to
guarantee equivalence between a previously pruned state and the full-history
reference after arbitrary future admissions.
\end{proposition}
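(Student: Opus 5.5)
The statement is an insufficiency claim, so a single counterexample establishes it. I would use the three-key construction of Example~\ref{ex:future}. The plan has three steps. First, verify that in the three-key problem with \(C=1\) and \(\sigma=5\) the key \((0,0.9)\) is reserve. Second, show that after admitting \((0,-10)\), the full-history optimum gives that key a strictly positive coefficient. Third, show that the pruned trajectory then differs from the full-history reference. Reserve status in the current solve thus fails to certify equivalence after this admission.

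For the first step I would avoid relying on floating-point output and check the KKT conditions of Equation~\ref{eq:svdd} directly. By symmetry I expect the candidate \(\alpha=(1/2,1/2,0)\). With \(C=1\) no upper bounds are active, so the two symmetric keys lie in \(S\). Stationarity on \(S\) gives \(\rho=1-2(K\alpha)_1=1-(1+e^{-4/25})\). The reserve key then needs nonnegative reduced gradient, \(2(K\alpha)_3-1+\rho\ge 0\), that is, \(2(K\alpha)_3\ge 2(K\alpha)_1\). This reduces to
\[
2e^{-(1+0.81)/25}\ \ge\ 1+e^{-4/25},
\]
a concrete scalar inequality that I would check by explicit bounds on the exponentials. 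The two sides are roughly \(1.865\) and \(1.852\).

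The second step is the main obstacle. After the admission, I must show the four-key optimum has \(\alpha_3>0\), and I cannot simply quote the reported value \(0.033\). The most economical route avoids the exact optimum. I would argue by contradiction and suppose \(\alpha_3=0\) at the four-key optimum. By Proposition~\ref{prop:reserve}, the restricted vector is then optimal for the three-key problem on \((-1,0)\), \((1,0)\), \((0,-10)\). That problem is symmetric in the first two keys, and by strict convexity (positive-definite RBF Gram matrix) its optimum is unique. So it has the form \((a,a,1-2a)\), with \(a\) determined by one stationarity equation that I can bound.

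Plugging this into the reduced gradient for key~3 should give a strictly negative value. That contradicts the KKT condition for \(\alpha_3=0\) in the four-key problem, so \(\alpha_3>0\). The delicate part is obtaining bounds on \(a\) and on \(\rho\) tight enough to fix the sign. Because the far key has tiny kernel values \(e^{-100/25}\approx 0.018\) with the others, I expect a first-order estimate to suffice.

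Finally, uniqueness of both optima settles the third step. The pruned state's optimum is the three-key solution above, while the full-history optimum has \(\alpha_3>0\). The two optima therefore differ, so the induced gate scores differ for some probe, for instance by linear independence of the kernel functions at distinct keys. This establishes non-equivalence, and the measured \(4.1\times10^{-3}\) serves as a numerical illustration rather than a step of the proof.
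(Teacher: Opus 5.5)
Your proposal is correct. It uses the same counterexample as the paper but verifies it differently.

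\textbf{How the two routes differ.} The paper's proof is a one-line appeal to Example~\ref{ex:future}, so it rests on solver output: a coefficient classified as zero at a declared tolerance, the reported $0.033$ after admission, and the measured $4.1\times10^{-3}$ gap. You instead certify each fact in exact arithmetic. You use KKT sufficiency for $(1/2,1/2,0)$. You then combine Proposition~\ref{prop:reserve}, the reflection symmetry and strict convexity to show that $(a,a,0,1-2a)$ cannot be optimal for the four-key problem.

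What this buys is independence from the tolerance and reference-recovery failures the paper documents in Section~\ref{sec:journal-sequential-checks}. It also exposes the mechanism: the new key couples more strongly to keys 1 and 2 than to key 3. The paper's version is shorter and reports the size of the gap, which the statement does not need.

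\textbf{The plan goes through.} With $k=K_{14}=e^{-101/25}\approx0.0176$ you get $a=(1-k)/(3+K_{12}-4k)\approx0.2598$ and $b=1-2a\approx0.4805$. The reduced-gradient test for key~3 is then
\[
(K\alpha)_3-(K\alpha)_1=a\,(2K_{13}-1-K_{12})+b\,(K_{34}-K_{14})\approx0.0082a-0.0090b\approx-0.0022<0 .
\]

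\textbf{Three small corrections.}

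First, $2e^{-1.81/25}\approx1.8603$, not $1.865$. The reserve margin is only about $0.008$, so your bounds on $e^{-0.0724}$ and $e^{-0.16}$ need roughly three correct decimals.

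Second, the far-key kernels are not negligible. If you drop $K_{14}$ and $K_{34}=e^{-118.81/25}\approx0.0086$, you are left with $0.0082a>0$, which has the wrong sign. It is their difference that reactivates key~3. Keeping them to first order does suffice. The inequality needs $b/a>(2K_{13}-1-K_{12})/(K_{14}-K_{34})\approx0.91$, and you have $b/a\approx1.85$.

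Third, in the last step the score difference is $2\sum_i c_i\kappa(\cdot,x_i)-\Delta\rho$. Linear independence of the Gaussians alone does not handle the constant offset. A clean fix is to suppose the scores agree at the four keys. Then $2Kc=\Delta\rho\,\mathbf1$, and together with $\mathbf1^\top c=0$ and $K\succ0$ this forces $\Delta\rho=0$ and hence $c=0$. That contradicts $c_3=\alpha_3>0$, so the two states already differ on the key probes.
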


\begin{proof}
Example~\ref{ex:future} is a counterexample: the same key has zero coefficient
before the admission and positive coefficient afterward, and the two gate
scores then differ.
\end{proof}

Blocking the faraway point would prevent this particular sequence. To
justify a general admission rule, however, we would need to prove that it
preserves the relevant optimum. Another possibility is to retain enough
information to reconstruct the reference when its support set changes.
We have not established a sufficient admission rule or the minimal storage
needed for reconstruction.

\subsection{The fixed-cap target for active-key deletion}\label{sec:decrement-results}

\begin{proposition}[Fixed-\(C\) decrement target]\label{prop:decrement}
Assume the retained-key problem in Equation~\ref{eq:svdd} has a unique
coefficient optimum. If the algebraic reverse path completes while maintaining
the optimality conditions and drives the deleted coefficient to zero, its
restricted coefficient vector equals a fresh retained-key solution of
Equation~\ref{eq:svdd} under the same \(C\).
\end{proposition}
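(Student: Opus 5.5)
We argue by sufficiency of the KKT conditions for the convex problem in Equation~\ref{eq:svdd}, followed by uniqueness. The problem has a convex objective, since \(K\) is a Gram matrix of the RBF kernel and hence positive semidefinite. Its constraints are affine. Therefore any feasible point that, together with some multipliers \((\rho,\mu,\eta)\), satisfies the stationarity and complementary-slackness conditions of the Lagrangian \(L\) from Section~\ref{sec:gate} is a global minimizer; no constraint qualification is needed in the sufficiency direction. The plan is to show that the terminal state of the reverse path, restricted to the retained indices, is such a KKT point of the retained-key problem at the same \(C\). The uniqueness assumption then identifies it with the fresh solve.

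\textbf{Step 1 (terminal state of the path).} By hypothesis, the path maintains at every step the full-context optimality conditions for all keys other than the target \(t\). These are:
\begin{itemize}
\item \(2(K\alpha)_i-K_{ii}+\rho=0\) on \(S\);
\item \(2(K\alpha)_i-K_{ii}+\rho\ge 0\) on \(R\) with \(\alpha_i=0\);
\item \(2(K\alpha)_i-K_{ii}+\rho\le 0\) on \(E\) with \(\alpha_i=C\);
\item the unit-sum and box constraints on the whole vector.
\end{itemize}
The target's coefficient is the control parameter being driven down, so no stationarity condition is imposed at \(t\). The system in Equation~\ref{eq:kkt-system} encodes exactly these equalities for fixed \(S,E\). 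Bookkeeping moves between sets preserve the inequalities, since a key changes set precisely when its bound or gradient condition becomes tight.

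At termination, \(\alpha_t=0\). Every term \((K\alpha)_i\) then loses its \(t\)-contribution \(K_{it}\alpha_t\), so for every retained \(i\) the quantity \(2(K\alpha)_i\) coincides with the retained-key quantity \(2(K_{\bar t\bar t}\alpha_{\bar t})_i\). Likewise \(\sum_{i\ne t}\alpha_i=1\).

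\textbf{Step 2 (KKT for the retained problem).} We take the same \(\rho\) and set \(\mu_i,\eta_i\) from the sign of \(2(K\alpha)_i-K_{ii}+\rho\) on \(R\) and \(E\), with zero multipliers on \(S\). This gives a feasible primal–dual pair for the retained problem satisfying stationarity and complementarity. The key point is that \(C\) is never altered: the box constraints of the retained problem are the same as those the path respected. By convexity, the restricted vector is optimal for the retained problem.

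\textbf{Step 3 (uniqueness).} The fresh retained-key solution is also optimal, so the uniqueness assumption forces the two vectors to be equal.

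The main obstacle is Step 1. We must state precisely what ``completes while maintaining the optimality conditions'' guarantees across set transitions. In particular, the condition at \(t\) is deliberately dropped, and we must make sure the path's \(\rho\) is a genuine multiplier rather than merely a solution of Equation~\ref{eq:kkt-system} that ignores the \(R\) and \(E\) inequalities. To settle this, we would take the hypothesis to include these inequalities explicitly, as in \citet{cp2000}. The nonsingularity of \(M\) then only matters for the path existing, not for the conclusion.
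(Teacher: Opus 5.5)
Your proposal is correct and follows the paper's proof. Both show that, once the zero target is removed, the terminal state is feasible and satisfies stationarity, nonnegative multipliers and complementarity at the unchanged cap \(C\). Both then use KKT sufficiency for the convex problem and finish with the uniqueness assumption. Your explicit sign conditions on \(R\) and \(E\), together with your reading of the hypothesis as including those inequalities, are what the paper's phrase ``valid multiplier signs'' takes for granted.
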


\begin{proof}
At completion, removing the zero target coefficient leaves a retained-key state
satisfying the original constraints, stationarity, valid multiplier signs, and
complementarity under the unchanged cap \(C\). These conditions are sufficient
for the convex problem, and uniqueness identifies the coefficient vector with
the retained-key refit.
\end{proof}

The uniqueness assumption needs care when keys resemble one another.
With exact duplicates, weight can move between identical keys without
changing the objective or weighted kernel sum. Their coefficients can
therefore have multiple optima. Distinct keys under the Gaussian RBF
kernel give a positive-definite Gram matrix and a unique optimum, but
near-duplicates can make the problem ill-conditioned. Two numerical
solvers may then return different approximate coefficients with nearly
equal objective values. Those near-ties do not imply multiple exact optima.

Now suppose the similar keys carry different values. Moving weight from
one key to another changes which value contributes to the readout, even
if the weighted kernel sum changes very little. For exact duplicates,
the sum and, with the same offset, Equation~\ref{eq:gate-score} are
unchanged, while the readout can differ. This is why we measure both
score and readout differences. Neither coefficient nonuniqueness nor
sampled gate-score agreement establishes output equality. Checking whether
the states are valid also requires the actual coefficients and stored
offsets from both the returned and reference states.

\subsection{Numerical verification against the fixed-cap reference}\label{sec:verification}

We first compare the decrement with the refit, then examine a projection
intended to reduce readout disagreement. Table~\ref{tab:evidence-map}
describes which trials enter each experiment and how we normalize their
readout differences. The resulting percentages refer to different
measurements and should be interpreted within their respective experiments.

\begin{table}[t]
\centering\small
\caption{Evidence map: populations, execution records, and readout denominators.}
\label{tab:evidence-map}
\begin{tabular}{@{}>{\raggedright\arraybackslash}p{0.21\linewidth}>{\raggedright\arraybackslash}p{0.35\linewidth}>{\raggedright\arraybackslash}p{0.36\linewidth}@{}}
\toprule
Experiment & Population and state record & Readout measurement \\
\midrule
Standalone (Sec.~\ref{sec:verification}) &
1,200 requests; 1,199 maintained returns and one refit fallback. &
Scalar values on each regime's worst gate-score trial; readout difference divided by that trial's readout range. \\
\addlinespace
Projection (Sec.~\ref{sec:tiebreak}) &
Same attempted trials; each tolerance row uses trials with both projections returned. &
Fixed four-dimensional values; maximum across trials of each trial's probe-wise difference divided by its reference readout range. \\
\bottomrule
\end{tabular}
\end{table}

Across the earlier \(1{,}200\) trials, with \(300\) per regime, we obtained
a result for every request (Table~\ref{tab:deletion}). The maintained
decrement completed \(1{,}199\) times. In one clinical trial, the margin
set was empty and we used the refit already computed for that trial as a
fallback. Note that we had paid for this reference solve as part of the
audit; an operational system would have to account for its cost when a
decrement failed.

We archived scalar differences, statuses, and condition numbers for each
trial, but did not save complete key/Gram matrices, coefficient vectors,
or actual offsets. This prevents independent retrospective state
validation of either the returned solution or its numerical reference.
We can recover sampled disagreement and execution status from these
records, but cannot use them to establish whether the original states
were valid or invalid.
The sequential experiment in
Section~\ref{sec:journal-sequential-checks} records and checks actual states.
Its results do not retrospectively validate or invalidate those earlier trials.

\begin{table}[t]
\centering
\small
\caption{Fixed-\(C\) deletion audit over four regimes with \(300\) attempted
trials each. ``Decrement'' counts trials completed by the maintained update; the
remaining trial returned the precomputed reference by refit fallback. Partition match
counts decrement trials in which deletion and retained-key refit assigned every
retained key to the same coefficient group; its denominator is the decrement
column, and it is a diagnostic; the criterion is gate-score agreement. Gate-score errors are the
median, 95th percentile, and worst case over decrement trials.}
\label{tab:deletion}
\begin{tabular}{@{}l c c c c@{}}
\toprule
regime & returned & \shortstack{by\\decrement} & \shortstack{partition\\match} &
\shortstack{gate-score difference\\median / p95 / worst} \\
\midrule
Gaussian & 300/300 & 300/300 & 296/300 &
\(2.4\times10^{-9}\) / \(1.2\times10^{-7}\) / \(6.4\times10^{-3}\) \\
redundant & 300/300 & 300/300 & 156/300 &
\(5.7\times10^{-7}\) / \(1.6\times10^{-4}\) / \(9.6\times10^{-4}\) \\
MIMIC-IV & 300/300 & 299/300 & 293/299 &
\(1.6\times10^{-9}\) / \(1.2\times10^{-7}\) / \(1.1\times10^{-2}\) \\
learned keys & 300/300 & 300/300 & 300/300 &
\(4.5\times10^{-13}\) / \(4.6\times10^{-12}\) / \(1.1\times10^{-11}\) \\
\bottomrule
\end{tabular}
\end{table}

We generally obtained close gate-score agreement, with a small number of
larger deviations (Table~\ref{tab:deletion}). For Gaussian and clinical
keys, the 95th percentile was near \(10^{-7}\), while the worst case
reached \(10^{-3}\) to \(10^{-2}\). These larger deviations occurred
alongside differences in the coefficient partitions and nearly equal
objective values. On distinct near-duplicate keys, such behavior indicates
sensitivity to solver tolerances rather than multiple exact optima. By
comparison, the learned-key regime had no partition disagreements and a
worst deviation of \(1.1\times10^{-11}\).

In the redundant regime, the partitions matched in only \(52\%\) of
trials, although the median gate-score error was \(5.7\times10^{-7}\).
The nearly singular similarity matrix makes it possible for small rounding
differences to move coefficients across group boundaries while leaving
the weighted kernel sum almost unchanged. Counting partition matches
alone would therefore give a poor account of the score agreement in this
regime. We return to the effect on values below.

We examined both partition disagreement and the condition number of the
retained Gram matrix to understand the larger deviations
(Table~\ref{tab:tail}). The association with partition disagreement was
stronger. Deviations were weakly correlated with the condition number in
the three regimes with a tail, and the worst decile had the same
conditioning as the remaining trials. In every regime, trials whose
group assignments differed from the refit's contained the largest
deviations, by three to six orders of magnitude over trials whose
assignments agreed.

\begin{table}[t]
\centering
\small
\caption{Diagnostics associated with the deviation tail. Spearman \(r_s\) correlates a
trial's gate-score deviation with its retained-Gram condition number;
``disagreeing'' trials are decrement trials whose coefficient groups differ from
the refit's. Numeric deviation cells are medians unless labeled ``worst.''
The redundant-row text identifies where the ten largest deviations occur;
a dash denotes no reported value.}
\label{tab:tail}
\setlength{\tabcolsep}{4pt}
\begin{tabular}{@{}l c c c c@{}}
\toprule
regime & \(r_s\) & disagreeing & deviation, disagreeing & deviation, agreeing \\
\midrule
Gaussian & \(+0.03\) & \(4/300\) & \(1.4\times10^{-3}\) & \(2.3\times10^{-9}\) \\
redundant & \(-0.08\) & \(144/300\) & ten largest deviations & --- \\
MIMIC-IV & \(-0.14\) & \(6/299\) & \(2.2\times10^{-5}\) & \(1.5\times10^{-9}\) \\
learned keys & \(+0.60\) & \(0/300\) & --- & worst \(1.1\times10^{-11}\) \\
\bottomrule
\end{tabular}
\end{table}

The learned-key regime complicates this interpretation. It had no
partition disagreements and a worst deviation of
\(1.1\times10^{-11}\), yet it was the one regime in which deviation
tracked conditioning (\(r_s=+0.60\)). Thus the association with group
boundaries in the other regimes does not identify a single cause, nor
does it rule out a contribution from conditioning.

To put the score differences in context, we compared them with the
reference score's range over the same probes. On the worst gate-score
trial in each regime, the deviation was about \(3\%\) of that range.
The corresponding scalar readout differences were \(0.2\%\),
\(17.4\%\), and \(5.6\%\) of readout range for Gaussian, redundant,
and clinical keys. The redundant regime was most sensitive because it
paired near-identical keys with unrelated values. Appendix~\ref{app:projection}
gives the exact score-range percentages.

Coefficient decay changes the target weights without allowing the
surviving weights to adjust (Figure~\ref{fig:deletion}). In our comparison, it
differed from the reference by \(2.2\times10^{-2}\) to
\(2.4\times10^{-1}\). Updating the fit produced closer agreement than
this suppression of the original coefficients.

\begin{figure}[t]
\centering
\includegraphics[width=\linewidth]{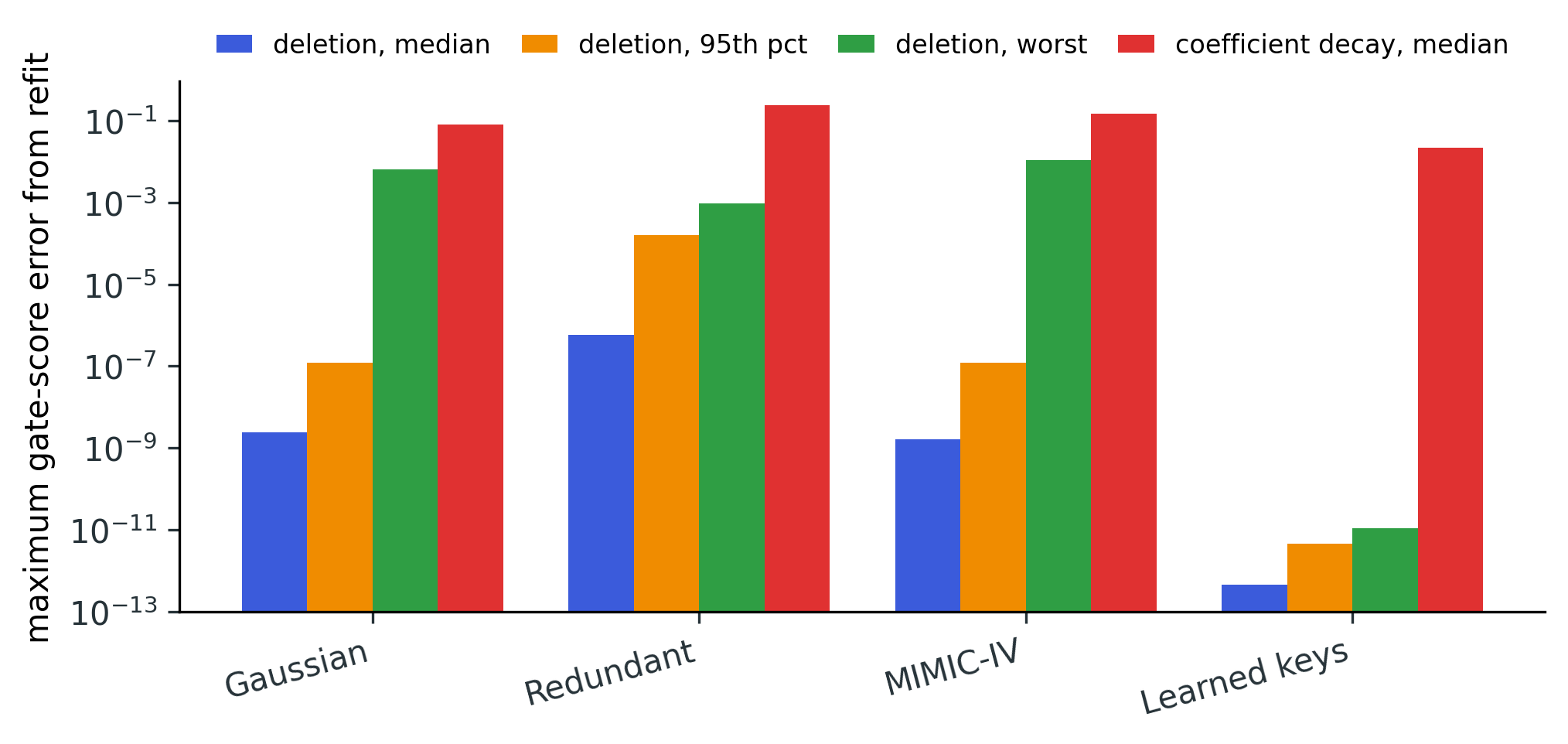}
\caption{Maximum absolute gate-score difference from the fixed-\(C\)
retained-key refit over each trial's probes, in kernel-similarity units.
Maintained deletion and coefficient decay use the same reference. Median,
95th percentile, and worst-case markers expose the tail beyond typical
agreement; Table~\ref{tab:deletion} gives the trial denominators and fallback.}

\label{fig:deletion}
\end{figure}

In the earlier timing comparison, we measured the maintained update alone.
Across \(30\) uniform-random removals at each context size
(\(n\in\{120,256,384,512\}\)), its median latency was
\(0.30\)--\(1.04\) milliseconds on the reference CPU. The median
per-trial ratio of fresh-refit time to maintained deletion was
\(24\)--\(223\times\), increasing with context size on this hardware.
These timings omit independent checks of the returned and reference
states and any rescue needed when an update fails, so they do not give
the cost of a fully checked deletion operation.

We also checked deletions on records represented by learned and clinical
keys. With the learned keys frozen, deleting one fact left the other
seven answerable, and removing then adding a record corrected a single
learned key--value association. With clinical records, the completed
update at fixed \(C\) removed the selected patient's gate weight while
retaining the others. We evaluated these edits through gate scores and
readouts while keeping the encoded keys fixed, as described in
Table~\ref{tab:claims}.

\subsection{Reducing readout disagreement with a minimum-norm projection}\label{sec:tiebreak}

We wanted to see whether choosing coefficients by the same additional
criterion would reduce the readout disagreement. We therefore tested a
minimum-norm tie-break: each method projects its solution to the smallest
coefficient norm allowed within a stated tolerance of its own achieved
objective.
\begin{equation}
P_\varepsilon(\hat\alpha)=\underset{a}{\operatorname{argmin}}\ \lVert a\rVert_2^2
\quad\text{s.t.}\quad
\mathbf1^\top a=1,\quad 0\leq a_i\leq C,\quad
f(a)\leq f(\hat\alpha)+\varepsilon\max\{1,|f(\hat\alpha)|\}.
\label{eq:projection}
\end{equation}
Here \(f\) is the objective in Equation~\ref{eq:svdd}, and \(\hat\alpha\)
is the corresponding method's achieved solution. The positive additive allowance remains
meaningful when \(f\) is negative; multiplying a negative objective by
\(1+\varepsilon\) would instead tighten its cap. The decrement and retained-key
refit use their own objective levels, without seeing the other's. The exact
convex projection has a unique minimum-norm point whenever its set is nonempty.
Appendix~\ref{app:projection} describes the numerical solver returns.

We repeated the \(1{,}200\) attempted trials with the same seeds and probes,
assigning each retained key a fixed four-dimensional value vector. We
measured the largest readout difference over all included trials, dividing
each trial's difference by its reference readout range, defined as the
largest coordinate range across probes (Table~\ref{tab:tiebreak}). This
measurement covers the trials returned at each tolerance, whereas the
scalar diagnostic in Section~\ref{sec:verification} uses each regime's
worst \emph{gate-score} trial (Table~\ref{tab:evidence-map}).

\begin{table}[t]
\centering
\small
\caption{Independent minimum-norm projections of decrement and retained-key
refit. Tolerance rows use solver-returned pairs; raw rows use all completed
decrements. Gate-score differences are in kernel-similarity units. Readout
worst is the maximum across included trials of probe-wise error divided by
that trial's reference readout range, with fixed four-dimensional values.
Partition thresholds and solver-status details appear in
Appendix~\ref{app:projection}.}

\label{tab:tiebreak}
\setlength{\tabcolsep}{5pt}
\begin{tabular}{@{}l l c c c c@{}}
\toprule
regime & \(\varepsilon\) & solved & \shortstack{worst\\gate-score gap} & \shortstack{readout worst\\(\% readout range)} & \shortstack{partition\\match} \\
\midrule
Gaussian & none & --- & \(6.4\times10^{-3}\) & \(1.0\%\) & \(99\%\) \\
 & \(10^{-5}\) & \(299/300\) & \(5.9\times10^{-4}\) & \(0.2\%\) & \(100\%\) \\
 & \(10^{-4}\) & \(300/300\) & \(7.9\times10^{-4}\) & \(0.4\%\) & \(100\%\) \\
 & \(10^{-3}\) & \(300/300\) & \(3.4\times10^{-4}\) & \(0.1\%\) & \(100\%\) \\
\addlinespace
redundant & none & --- & \(9.6\times10^{-4}\) & \(51.7\%\) & \(74\%\) \\
 & \(10^{-5}\) & \(298/300\) & \(3.8\times10^{-3}\) & \(86.0\%\) & \(88\%\) \\
 & \(10^{-4}\) & \(300/300\) & \(5.2\times10^{-3}\) & \(12.6\%\) & \(98\%\) \\
 & \(10^{-3}\) & \(300/300\) & \(1.3\times10^{-3}\) & \(1.7\%\) & \(100\%\) \\
\addlinespace
MIMIC-IV & none & --- & \(1.1\times10^{-2}\) & \(5.3\%\) & \(99\%\) \\
 & \(10^{-5}\) & \(299/299\) & \(1.9\times10^{-3}\) & \(1.0\%\) & \(100\%\) \\
 & \(10^{-4}\) & \(299/299\) & \(3.5\times10^{-3}\) & \(1.0\%\) & \(100\%\) \\
 & \(10^{-3}\) & \(299/299\) & \(4.2\times10^{-3}\) & \(1.1\%\) & \(98\%\) \\
\addlinespace
learned keys & none & --- & \(1.1\times10^{-11}\) & \(0.0\%\) & \(100\%\) \\
 & \(10^{-5}\) & \(300/300\) & \(8.5\times10^{-11}\) & \(0.0\%\) & \(100\%\) \\
 & \(10^{-4}\) & \(300/300\) & \(5.7\times10^{-12}\) & \(0.0\%\) & \(100\%\) \\
 & \(10^{-3}\) & \(300/300\) & \(1.1\times10^{-8}\) & \(0.0\%\) & \(100\%\) \\
\bottomrule
\end{tabular}
\end{table}

We chose \(\varepsilon=10^{-3}\), the first tested tolerance above every
measured relative objective gap, which returned both projections on all \(1{,}199\)
completed trials; tighter tolerances sometimes returned no projection or
increased disagreement (Appendix~\ref{app:projection}). The worst readout
differences on Gaussian, redundant, and clinical keys fell from
\(1.0/51.7/5.3\%\) of range to \(0.1/1.7/1.1\%\), with partition agreement
of \(100/100/98\%\) (Table~\ref{tab:tiebreak}). A gate-score difference of
\(10^{-4}\)--\(10^{-3}\) remains as each side projects to its own objective
level.

\subsection{Sequential numerical reliability and checked execution}
\label{sec:journal-sequential-checks}

The fixed-cap result concerns a declared optimization problem; a finite-precision
implementation also needs a rule for accepting its returned state. We therefore
extend the standalone deletion audit with chained deletions and admissions.
This experiment evaluates numerical reliability of the established maintained
support-vector machinery, not a new optimization algorithm or a stronger
deletion theorem.

We schedule 80 synthetic trajectories: Gaussian and clustered near-duplicate
keys, initial sizes \(n_0\in\{64,128\}\), dimensions \(d\in\{6,64\}\), and ten
independent seeds (41000--41009) per condition. Each trajectory contains 32 cycles, each
consisting of a uniformly selected current-key deletion followed by one fresh
admission, for 5,120 scheduled operations. The cap remains
\(C=1/(0.4n_0)\), and the radial-kernel width remains
\(\sigma=2\sqrt{d/6}\). Gaussian keys have independent standard-normal coordinates. Near-duplicate
keys use eight fixed zero-mean Gaussian cluster centers with coordinate
standard deviation 2 and noise of standard deviation 0.02. Values are independent
four-dimensional uniform vectors on $[-1,1]$. Probes contain the retained and changed keys,
32 independent fixed anchors, and 16 jittered retained keys. This is a
synthetic dimension and maintenance-history extension, not a test on
high-dimensional transformer keys or task utility.

At every operation we compare against a fixed-cap refit on the current keys and separately
recompute mass error, coefficient-bound violations, and KKT stationarity from
the actual keys, coefficients, and stored offset. Let
\(t_i=K_{ii}-2(K\alpha)_i\). With the original numerical partition threshold
\(10^{-7}\), the stationarity terms are \(|\rho-t_i|\) on the margin,
\(\max(0,t_i-\rho)\) at the lower bound, and
\(\max(0,\rho-t_i)\) at the upper bound. A returned state passes only if the
maximum of these terms, mass error, and bound violations is at most
\(10^{-5}\). This is numerical acceptance at a stated tolerance, not equality
in exact arithmetic. A reference must pass the same check; agreement with a
reference alone does not establish state validity.

\paragraph{Initial audited run.}
The initial run completed 66 of 80 trajectories. Eight initial states failed
the check, and six trajectories stopped after a later replacement state also
failed it. Of the 5,120 scheduled operations, 4,390 ran, 4,384 ended in states
passing the check, and 730 were not run after these failures. There were 207
refit fallbacks: 204 raw candidates violated a coefficient bound by more than
\(10^{-5}\), and three failed stationarity. No raw update raised an exception.
Separately, 221 fresh reference states failed the state check. The largest raw
readout disagreement was 27.43\% of retained value range. All failures are included in the reported denominators.
Diagnostics localize infeasible updates to degenerate boundary events and
show that the original QP stopping settings can leave near-boundary
coefficients insufficiently accurate for the declared numerical partition.
These findings concern the implementation and reference recovery, not the
fixed-cap mathematical target.

\paragraph{Separately checked follow-up.}
The follow-up uses the same inputs and unchanged acceptance thresholds. A separate transactional wrapper copies the current state,
attempts the original update, and publishes the candidate only if the state
check passes. Otherwise it tries a direct QP refit at solver tolerance
\(10^{-12}\), with a predeclared \(10^{-13}\) second attempt if required. For
fixed coefficients it chooses a deterministic minimax KKT offset, writes that
offset into the actual state, recomputes cached gradients, and repeats the
unchanged state check. It does not clip, snap, renormalize, or regularize the
coefficients. If rescue cannot pass, it retains the previous valid state and
refuses the requested operation. The audit reference is a separately checked
strict refit through the same QP backend, not an independently implemented
optimizer.

All 80 follow-up trajectories and all 5,120 operations completed. Eight initial
states and 84 raw updates required rescue; the latter comprised 21 deletions
and 63 admissions, all with coefficient-bound violations. Every strict refit
passed at the first \(10^{-12}\) tier. No operation was refused or left unrun,
and no final state or strict reference failed the check. The worst published
state residual was \(8.61\times10^{-6}\); the worst strict-reference residual
was \(1.52\times10^{-6}\). The published gate-score difference had median,
95th percentile, and worst values of
\(5.76\times10^{-9}\), \(3.24\times10^{-7}\), and
\(6.04\times10^{-6}\), respectively. Published readout disagreement was at
most 0.5273\% of retained value range, or 0.9709\% under the separate
reference-readout-range normalization. The raw follow-up candidates still
include a 27.43\% readout tail before rejection and rescue.

\begin{table}[t]
\centering
\small
\caption{Both preserved runs use ten trajectories and 640 scheduled operations
per condition. All follow-up conditions completed 10/10 trajectories.
The Initial run and Checked follow-up column groups describe different runs. The final column is the maximum published readout disagreement divided by the
maximum coordinate range of the retained values; it is not decoded task
utility or the reference readout range. Initial rescues are additional to
update rescues.}
\label{tab:journal-sequential-guard}
\begin{tabular}{@{}lcrrrr@{}}
\toprule
& & \multicolumn{1}{c}{Initial run} & \multicolumn{3}{c}{Checked follow-up} \\
\cmidrule(lr){3-3}\cmidrule(l){4-6}
Regime & \((n_0,d)\) & \shortstack{Trajectories\\complete} &
\shortstack{Initial\\rescues} & \shortstack{Update\\rescues} &
\shortstack{Worst readout\\(\% value range)} \\
\midrule
Gaussian & (64,6) & 10/10 & 0 & 0/640 & 0.000297\% \\
Gaussian & (64,64) & 8/10 & 2 & 0/640 & 0.000056\% \\
Gaussian & (128,6) & 7/10 & 3 & 1/640 & 0.000164\% \\
Gaussian & (128,64) & 9/10 & 1 & 0/640 & 0.000526\% \\
Near-duplicate & (64,6) & 6/10 & 1 & 15/640 & 0.147706\% \\
Near-duplicate & (64,64) & 6/10 & 1 & 9/640 & 0.527320\% \\
Near-duplicate & (128,6) & 10/10 & 0 & 7/640 & 0.167766\% \\
Near-duplicate & (128,64) & 10/10 & 0 & 52/640 & 0.299647\% \\
\bottomrule
\end{tabular}
\end{table}

The follow-up tests a numerical controller around the unchanged maintained
solver. It neither repairs nor proves that solver; the evidence covers the
scheduled grid. The
trajectories differ after a rescue, so their raw update counts are not paired
measurements of the same evolving solver state. The 66 completed initial
trajectories have identical generated-input hashes in the follow-up. Raw
candidates, state checks, reference checks, rescue attempts, and all
denominators are recorded separately. Full transaction timing includes copying,
checking, and rescue, but measurements on a shared host support no timing
superiority claim. Original failures remain part of the reported result.

\section{Discussion}\label{sec:discussion}

The fixed-cap retained-key refit gives a requested deletion a concrete target.
The numerical evidence measures agreement with that target and the resulting
readout differences.

The sequential audit adds a numerical obligation: check the state that an
update actually returns. The raw implementation can return an infeasible
candidate without raising an exception, and a fresh numerical reference can
also fail the stated criterion. Checking both states separates completion,
numerical acceptance and reference agreement. The wrapper completes the finite
grid at unchanged thresholds, with strict refit rescues when necessary. It
remains a controller around the inherited solver; the degenerate event
handling has not been repaired or proved. Its reference uses the same QP
backend, with a separately recomputed state check.

Readout differences need their own measurement. Exact duplicates can admit
different coefficient optima, and distinct near-duplicates can yield sensitive
numerical solutions. Different values at similar addresses turn coefficient
redistribution into a readout difference. The earlier minimum-norm study
reduces its observed tail under an objective tolerance while leaving a
measurable gate-score difference. The sequential
study checks actual coefficients and offsets and reports readout differences
separately. Neither experiment establishes equality over every possible query.

The new trajectories are synthetic and operations within each trajectory are
correlated. They do not establish transfer to high-dimensional model keys,
decoded utility, membership-inference resistance or erasure of prior copies.
Changing the cap, re-encoding surviving keys or editing pretrained weights
defines another reference problem. The training path uses a separate
approximation and inherits neither the maintained numerical acceptance rule
nor the stored-inverse claim; its modeling and throughput results remain
bounded feasibility evidence in Appendix~\ref{app:training}. Direction-aware
handling of degenerate events and sufficient admission restrictions are
distinct next questions for implementation and theory.

\section{Conclusion}\label{sec:conclusion}

An entry can be removable from the current solved memory and become active
after a later admission. For the support-vector memory studied here, an
exactly zero coefficient certifies current-state removal, and active-key
deletion has a fixed-cap retained-key target under the stated assumptions.
The sequential audit preserves the raw implementation's failures; a separate
wrapper completes the same finite grid using explicit checks and rescues.
Those numerical results do not remove the limit shown by the three-key
construction: a current-state certificate does not automatically justify
permanent pruning across later admissions.

\section*{Statements and Declarations}

\subsection*{Data availability and reproducibility}

The manuscript source bundle contains the TeX sources and plotted figures.
The sequential protocols, solver settings, and outcome denominators are
reported in Section~\ref{sec:journal-sequential-checks}. Scripts and complete
source-free synthetic records are supplied as Supplementary Material,
\path{sv_attention_sequential_evidence_v1.zip}, which is available to reviewers
through the submission. Code and shareable evidence are available at
\url{https://github.com/VyLabs-AI/sv-attention}; the reviewer supplement also
supplies the code and synthetic results needed to check the sequential audit.
MIMIC-IV v3.1 requires
credentialed PhysioNet access under its data use agreement. Clinical processing
was local. The accompanying clinical artifacts comprise aggregate results and
the relative-hour single-stay figure; patient-level source records and derived
caches are not included.

\subsection*{Funding}

This research received no funding.

\subsection*{Competing interests}

Vishwajith Ramesh is CEO and Founder of Vy Labs, Inc. He declares no other
competing financial or non-financial interests and no editorial role at
\emph{Machine Learning}.

\subsection*{Ethics and consent}

The clinical analyses reuse MIMIC-IV v3.1 under its credentialed data-use
agreement. The resource's documentation reports that the Institutional Review
Board at Beth Israel Deaconess Medical Center reviewed the collection and
creation of the resource, waived informed consent, and approved data sharing
(\url{https://physionet.org/content/mimiciv/3.1/}). These are the approvals
reported for the source resource. Clinical processing for this study was
local. No separate project-specific ethics approval or consent waiver was
obtained for this secondary analysis. Patient-level source records and derived caches are not distributed
with the submission.

\bibliographystyle{tmlr}
\bibliography{references}

\appendix
\section{Rare-group retention under context selection}\label{app:selection}

We wanted to see which useful entries survived when we reduced the
context. The boundary fit favors keys that are atypical relative to the
others, so we expected it to help when informative tokens were rare and
the rest of the context was redundant. We gave each method the same
number \(k\) of tokens to retain, then tested rare-group recall and
retention of clinical event hours. To isolate the effect of selection,
we assigned equal coefficients to the retained keys and used the same
similarity readout for every method.

We first tested selection under skewed redundancy. Contexts contained six
singleton groups and six groups of ten near-duplicates, and every policy kept
the support set's mean token count of \(56\%\). Rare-group recall, defined as
the fraction of held-out queries near singleton-group centers whose returned
group is correct, was \(0.861\) for support selection, against \(0.319\) for an
attention-mass proxy in the style of Heavy-Hitter Oracle~\citep{h2o} that ranks
keys by summed similarity to all evaluation queries, \(0.472\) for uniform
sampling, and \(0\) for recency. The support-vector policy does not inspect the
queries.

In the clinical comparison, an unusual oxygen measurement could make an
hour both an event and a likely support point. We therefore checked
whether selectors retained event hours when they could not see that
measurement. We defined events by unstandardized hourly SpO\(_2<90\%\),
then removed SpO\(_2\) from every selector's input, leaving five channels.
Across the \(718\) qualifying stays at a matched mean
\(32\%\) token budget, support selection retained \(0.464\) of event hours
against \(0.225\) for the similarity-density proxy, with recency at \(0.409\)
and random selection at \(0.333\). The paired mean difference was \(0.239\);
resampling the \(718\) paired stay-level differences \(10{,}000\) times gave a
\(95\%\) percentile interval of \(0.194\)--\(0.284\), treating stays as
independent. When the selector could also see the vitals defining the
event, the same-channel version of this probe gave \(0.80\) against
\(0.05\). Excluding the event-defining measurement therefore reduced the
advantage, although support selection still retained more event hours
than the similarity-density proxy. Figure~\ref{fig:clinical-timeline}
shows the timing of retained hours for one stay from the same-channel
experiment.

We also evaluated a negative control with ten pairs of distinct keys.
Recall@1 was \(0.970\) without the gate and \(0.955\) with it, and the
support fraction was \(1.00\). Every key was active, so no keys were
removed; recall came from the similarity readout. The selection
benefits we observed were therefore specific to the redundancy and
atypicality probes above. Moreover, in a separate six-hour deterioration
prediction task, we obtained an AUROC of \(0.696\), below an LSTM at
\(0.783\) and softmax attention at \(0.751\). Preserving atypical hours
can help keep event records available, but it does not by itself optimize
prediction of local trends.

\begin{figure}[h]
\centering
\includegraphics[width=0.94\linewidth]{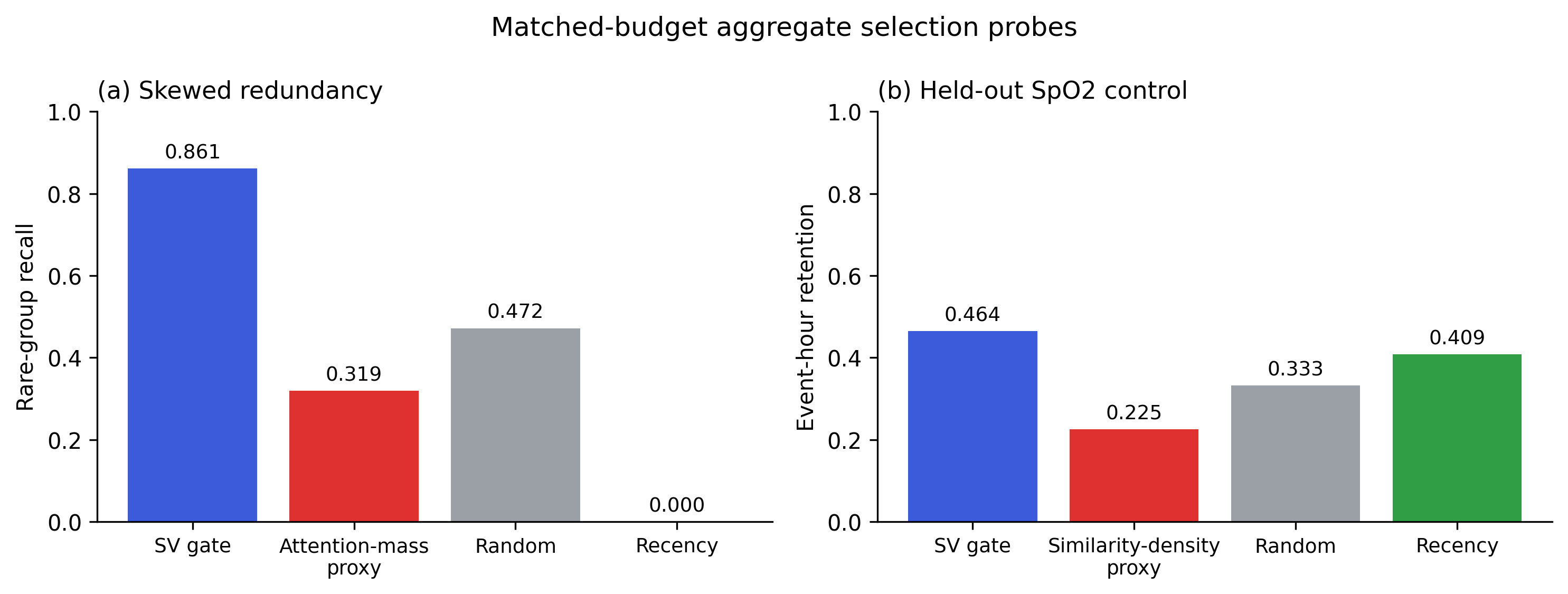}
\caption{Selection at matched token budgets. (a) Rare-group recall under skewed
redundancy. (b) Event-hour retention in the held-out SpO$_2$ control: every
selector excludes the event-defining channel. Across 718 stays at a matched
mean 32\% token budget, support selection retains 0.464 of event hours versus
0.225 for the similarity-density proxy. The outcome is selection of event
hours; clinical outcomes are not evaluated.}
\label{fig:selection}
\end{figure}

\begin{figure}[p]
\centering
\includegraphics[width=\linewidth]{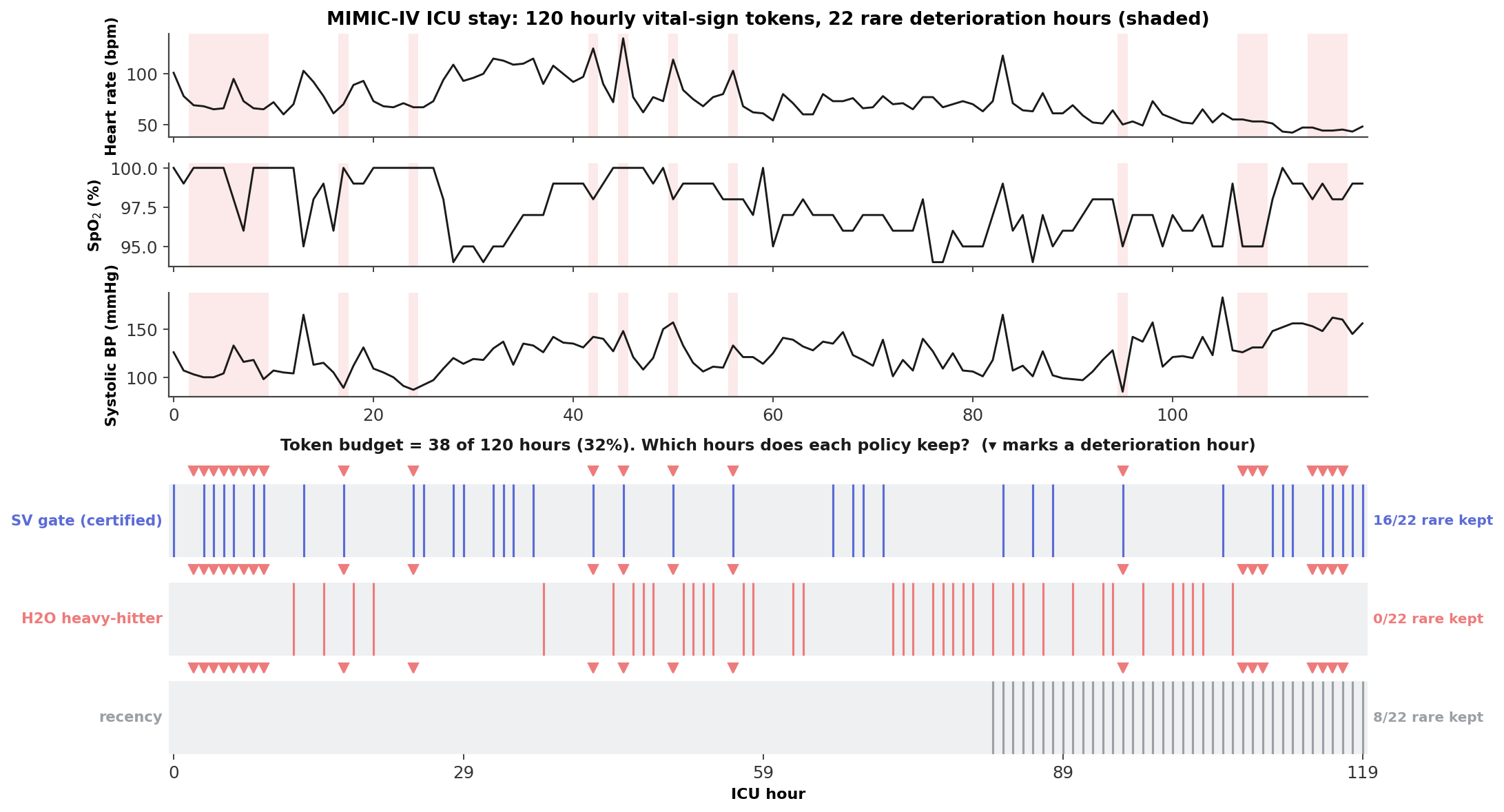}
\caption{Within-stay selection on MIMIC-IV at a matched budget of 38 out of
120 hourly tokens. Top: heart rate, SpO$_2$, and systolic blood pressure;
shading marks 22 hours flagged by the composite vital-sign threshold rule.
Bottom: retained hours for each policy. Support selection retains 16 flagged
hours, the similarity-density proxy retains 0, and recency retains 8. All
selectors use the six vital channels, including those defining the flags.
The panel labeled H2O denotes the similarity-density proxy. The held-out-channel
cohort control is reported in Figure~\ref{fig:selection}.}
\label{fig:clinical-timeline}
\end{figure}

\FloatBarrier
\section{Training the support-vector gate}\label{app:training}

Before training models with the gate, we checked its gradients. The
maintained custom gradient
matched finite differences for the coefficients, gate scores, and full
readout. A trainable student layer reduced mean-squared error against a
fixed teacher mapping by \(99\%\) in \(200\) optimizer steps. End-to-end
gradient checks also passed for the batched ridged path. Those
checks validate the stabilized approximation used during training; they
do not validate the unregularized fp64 coefficients.

We first trained a hybrid at \(3.22\)M parameters on the first \(5\)
million bytes of enwik8. All seven paired seeds favored the hybrid: mean
best-validation bits per character was
\(2.178\) against \(2.383\) for a sliding-window Transformer whose
visible-token count matched the hybrid's measured active state plus local
window. The mean paired relative improvement was \(8.6\%\), and a two-sided
paired \(t\)-test on the absolute differences gave \(p=0.001\). On TinyStories
the hybrid was lower on all three seeds, directionally positive but not
significant (\(p=0.057\))~\citep{tinystories}.

When we increased the budgets, however, the comparison reversed. At
matched \(6{,}000\)-step budgets, the hybrid was \(2.5\%\) worse on
average over five seeds at \(10\)M parameters, with high variation
between seeds. One \(32\)M run was also substantially behind. We
observed slower and less stable optimization of the gate at these
budgets, although we did not train the models to convergence and cannot
conclude how their performance would compare after full optimization.

At the \(3.22\)M configuration, we measured \(9{,}125\) tokens per second
for the combined forward, loss, and backward computation on an Apple M3
Ultra. This mixed path was \(35.8\times\) slower than the repository's
unfused softmax baseline. Maintained fp64 inference, which explicitly
updates coefficient groups, is slower still. We could therefore train
and evaluate small models, but the throughput and optimization results
leave substantial work before the gate could be part of a competitive
architecture or accelerator kernel.

\section{Protocols}\label{app:protocols}
\begingroup\setstretch{1.0}

For each trial of the audit at fixed \(C\), we built a context, set
\(C\) once, and solved the gate in fp64. We then removed the same two
keys through the reverse decrement and by refitting the surviving keys
at the same \(C\), keeping \(C\) unchanged as the context became
smaller. We compared gate scores on retained, removed, and perturbed
probes and recorded the maximum difference. We used scores because
near-duplicates can make the coefficients numerically sensitive even
when the exact optimum is unique, and evaluated readout agreement as
described in the main text. For the decay baseline, we scaled both
removed keys' coefficients by \(\gamma=0.01\) and used the same
measurement. The regimes were Gaussian keys (\(d=6\)), redundant
near-duplicates, real MIMIC-IV hourly vitals~\citep{mimiciv}, and a
trained model's learned key projections. We attempted \(300\) trials
per regime and stored all \(1{,}200\) statuses.

In the clinical trial where the decrement could not complete because the
margin set was empty, we used the refit that had already been computed.
The archive records the failed decrement and the reference solve. It
contains execution records and scalar comparisons; the missing keys,
coefficient arrays, and offsets prevent the independent state checks
described in Section~\ref{sec:verification}.

We also recorded the condition number of the retained Gram matrix and
agreement between the coefficient groups for the comparisons in
Section~\ref{sec:verification}. To translate the coefficient differences
into readout differences, we evaluated both vectors through the normalized
readout on the same probes. We drew the scalar values from an
independently seeded random stream so that this diagnostic would not
consume draws from the audited sequence. The contexts, deleted indices,
and probe jitter were bit-identical to the run without the diagnostic,
and every reported deviation was unchanged. The seed fixed the values
before any deviation was observed; we did not choose them in response
to the errors.

For verification, we used the maintained fp64 CPU solver, updating its
stored inverse after each single-key change and periodically recomputing
it from scratch. For training, we found coefficient partitions using
\(80\) iterations of accelerated projected gradient in single
precision, then used a batched linear solve with a \(10^{-3}\) ridge and
differentiated it with autograd. We used an RBF kernel with chunk length
\(64\). The constrained problem requires \(nC\geq1\), so the causal
implementation runs without the gate when fewer than
\(\lceil1/C\rceil+2\) keys are available and issues no certificate there.

We processed MIMIC-IV v3.1 locally under its credentialed data use
agreement~\citep{mimiciv}. The released clinical material contains
aggregate results and the relative-hour case figure
(Figure~\ref{fig:clinical-timeline}), without patient-level caches. For
the control that withheld the event-defining channel, we labeled events
using unstandardized hourly SpO\(_2<90\) after forward and backward
filling within each stay and cohort-median imputation. We removed the
standardized SpO\(_2\) column before computing the five-dimensional
gate and all baselines at the matched budget. To obtain the stay-level
percentile interval, we bootstrapped the \(718\) paired retention
differences \(10{,}000\) times with seed \(0\).

\subsection{Objective-tolerance projection diagnostics}\label{app:projection}

To solve Equation~\ref{eq:projection}, we symmetrize \(K\) and try
CLARABEL followed by SCS, accepting the statuses \texttt{optimal} or
\texttt{optimal\_inaccurate}. We then clip coefficients to \([0,C]\)
and renormalize their sum. The ``solved'' counts report pairs returned
by this procedure; we did not independently check feasibility after
postprocessing. Each side uses its own achieved objective plus
\(\varepsilon\max(1,|f|)\) as its cap. For the relative gap in
Table~\ref{tab:projection-diagnostics}, we use the refit's
\(\max(1,|f|)\) as the denominator.

\begin{table}[t]
\centering\small
\caption{Diagnostics for the projection study. Movement is the median largest
coefficient change at \(\varepsilon=10^{-3}\), with both sides rounding to
the displayed values; raw disagreement compares the
unprojected decrement and retained-key refit. Rank correlation relates relative
objective gap to raw gate-score deviation.}
\label{tab:projection-diagnostics}
\begin{tabular}{@{}l c c c@{}}
\toprule
Diagnostic & Gaussian & redundant & MIMIC-IV \\
\midrule
Maximum relative objective gap & \(1.7\times10^{-5}\) & \(5.1\times10^{-5}\) & \(1.2\times10^{-4}\) \\
Spearman rank correlation & \(+0.36\) & \(+0.69\) & \(+0.56\) \\
Median projection movement & \(1.5\times10^{-2}\) & \(6.0\times10^{-2}\) & \(2.1\times10^{-2}\) \\
Median raw disagreement & \(1.6\times10^{-9}\) & \(2.2\times10^{-6}\) & \(1.8\times10^{-9}\) \\
\bottomrule
\end{tabular}
\end{table}

At \(\varepsilon=10^{-6}\), no projection was returned for the Gaussian trial
with the largest raw deviation or for six near-duplicate trials. The clinical
worst gate-score disagreement was \(5.3\times10^{-2}\), nearly five times the
raw worst case. At \(10^{-8}\) and \(10^{-10}\), the near-duplicate worst
readout difference exceeded \(70\%\) of range. Table~\ref{tab:tiebreak} reports
\(10^{-5}\) and the two looser tolerances.

Both sides use the same coefficient constraints and objective, so their
objective sublevel sets are nested and overlap when both are nonempty.
Even so, tight caps can make the numerical solve sensitive. At
\(10^{-3}\), we reduced the median gate-score deviation by about an
order of magnitude in every regime, while the coefficient movements in
Table~\ref{tab:projection-diagnostics} were much larger than the
original disagreement. Each minimum-norm point depends on a different
objective level, which explains why a residual remains even after both
solutions move substantially. We obtained readout disagreement below
\(2\%\) of range in every regime, paying for one additional solve
per side.

The scalar readout diagnostic in Section~\ref{sec:verification} gives gate-score
range fractions of \(2.8/3.0/2.8\%\) on the Gaussian, redundant, and clinical
worst gate-score trials. In the four-dimensional projection study, the worst
raw readout trial differs from the worst gate-score trial for Gaussian and
redundant keys. Partition agreement there uses the fixed coefficient threshold
\(10^{-6}\), rather than each maintained solver's recorded partition.

\endgroup

\end{document}